\newcommand{\norm}[1]{\|#1\|}
\newcommand{\xplus}{\bs{X}_{t+1}}
\newcommand{\x}[1][t]{\bs{X}_{#1}}
\newcommand{\suite}[1]{({#1}_t)_{t \in \N}}
\newcommand{\sig}[1][t]{\sigma_{#1}}
\newcommand{\sigplus}{\sigma_{t+1}}
\newcommand{\yi}{\bs{Y}_{t, i}}
\newcommand{\y}[1]{\bs{Y}_{t, #1}}
\newcommand{\pplus}{\bs{p}_{t+1}}
\newcommand{\p}[1][t]{\bs{p}_{#1}}
\newcommand{\pdim}[1][i]{\left[\pplus\right]_{#1}}
\newcommand{\inv}[1]{\frac{1}{#1}}
\newcommand{\randomstep}[1][i]{\bs{\xi}_{t,#1}}
\newcommand{\chosenstep}[1][t]{\bs{\xi}_{#1}^\star}
\newcommand{\chosenstepdistribution}{\bs{\xi}}
\newcommand{\stepsizechangelong}[1][\bs{p}_{t+1}]{\exp\left( \frac{c}{2d_\sigma}\left( \frac{\norm{#1}^2}{n} - 1 \right)\right)}
\newcommand{\hsp}{\hspace{1mm}}
\newcommand{\firstdim}[1]{\left[#1\right]_1}
\newcommand{\fchosenstep}[1][t]{\firstdim{\chosenstep[#1]}}
\newcommand{\pchain}{\bs{\mathcal{P}}}
\newcommand{\E}{\mathbb{E}}
\newcommand{\R}{\mathbb{R}}
\newcommand{\NL}{\bs{\mathcal{N}}(\bs{0}, {I_n)}}
\newcommand{\Nlambda}[1][1]{\mathcal{N}_{#1:\lambda}}
\newcommand{\Nmin}[1][\lambda]{\mathcal{N}_{1:{#1}}}
\newcommand{\N}{\mathbb{N}}
\newcommand{\NNN}{\mathcal{N}}
\newcommand{\orderstatdis}[2][1]{\Psi_{#1:#2}}
\newcommand{\bs}[1]{\boldsymbol{#1}}
\DeclareMathOperator{\diff}{d}
\DeclareMathOperator{\argmin}{argmin}
\DeclareMathOperator{\Var}{Var}
\begin{document}

\title{Cumulative Step-size Adaptation on Linear Functions}
\author{Alexandre Chotard\inst{1}, Anne Auger\inst{1} \and
Nikolaus Hansen\inst{1}}
\authorrunning{Alexandre Chotard et al.} 
\tocauthor{Alexandre Chotard, Anne Auger and
Nikolaus Hansen}
\institute{TAO team, INRIA Saclay-Ile-de-France, LRI, Paris-Sud University,
France\\
\email{firstname.lastname@lri.fr}}

\maketitle       

\begin{abstract}
The CSA-ES is an Evolution Strategy with Cumulative Step size Adaptation, where the step size is adapted measuring the length of a so-called cumulative path. The cumulative path is a combination of the previous steps realized by the algorithm, where the importance of each step decreases with time.
This article studies the CSA-ES on composites of strictly increasing functions with affine linear functions through the investigation of its underlying Markov chains. Rigorous results on the change and the variation of the step size are derived with and without cumulation. The step-size diverges geometrically fast in most cases.  Furthermore, the influence of the cumulation parameter is studied.
\end{abstract}

\keywords{CSA, cumulative path, evolution path, evolution strategies, step-size adaptation}

\section{Introduction}
\newcommand{\mc}[2]{\newcommand{#1}{\ensuremath{#2}}}
\mc{\X}{\bs{X}}

Evolution strategies (ESs) are continuous stochastic optimization algorithms searching for the minimum of a real valued function $f:\R^n\to\R$. In the ($1,\lambda$)-ES, in each iteration, $\lambda$ new children are generated from a single parent point $\X\in\R^n$ by adding a random Gaussian vector to the parent,

\[
	\bs{X} \in \R^n \mapsto \bs{X} + \sigma \bs{\mathcal{N}} (\bs{0}, \bs{C}) 
	\enspace.
\]
Here, $\sigma\in \R_{+}^{*}$ is called step-size and $\bs{C}$ is a covariance matrix. The best of the $\lambda$ children, i.e.\ the one with the lowest $f$-value, becomes the parent of the next iteration. 
To achieve reasonably fast convergence, step size and covariance matrix have to be adapted throughout the iterations of the algorithm. In this paper, $\bs{C}$ is the identity and we investigate the so-called Cumulative Step-size Adaptation (CSA), which is used to adapt the step-size in the Covariance Matrix Adaptation Evolution Strategy (CMA-ES) \cite{ostermeier1994nonlocal,cmaesbirth}. In CSA, a cumulative path is introduced, which is a combination of all steps the algorithm has made, where the importance of a step decreases exponentially with time. Arnold and Beyer studied the behavior of CSA on sphere, cigar and ridge functions \cite{arnold2004performance,arnold2010behaviour,arnold2006cumulative,arnold2008evolution} and on dynamical optimization problems where the optimum moves randomly \cite{arnold2002random} or linearly \cite{arnold2006optimum}. Arnold also studied the behaviour of a ($1,\lambda$)-ES on linear functions with linear constraint \cite{arnold2011behaviour}. 

In this paper, we study the behaviour of the $(1,\lambda)$-CSA-ES on composites of strictly increasing functions with affine linear functions, e.g.\ $f:\vec x\mapsto \exp(x_2 - 2)$. Because the CSA-ES is invariant under translation, under change of an orthonormal basis (rotation and reflection), and under strictly increasing transformations of the $f$-value, we investigate, w.l.o.g., $f: \vec x \mapsto x_1$. Linear functions model the situation when the current parent is far (here infinitely far) from the optimum of a smooth function. To be far from the optimum means that the distance to the optimum is large, \emph{relative to the step-size $\sigma$}. This situation is undesirable and threatens premature convergence. 
The situation should be handled well, by increasing step widths, by any search algorithm (and is not handled well by the $(1,2)$-$\sigma$SA-ES \cite{hansen2006ecj}). Solving linear functions is also very useful to prove convergence independently of the initial state on more general function classes.

In Section~\ref{sec:CSA} we introduce the $(1, \lambda)$-CSA-ES, and some of its characteristics on linear functions. In Sections~\ref{sec:withoutcumulation} and \ref{sec:withcumulation} we study $\ln(\sigma_t)$ without and with cumulation, respectively. Section \ref{sec:var} presents an analysis of the variance of the logarithm of the step-size and in Section~\ref{sec:conclusion} we summarize our results.

\mc{\xx}{\bs{x}}
\mc{\NN}{\mathcal{N}(0,1)}
\paragraph*{Notations}
In this paper, we denote $t$ the iteration or time index, $n$ the search space dimension, \NN\ a standard normal distribution, i.e. a normal distribution with mean zero and standard deviation 1. The multivariate normal distribution with mean vector zero and covariance matrix identity will be denoted $\NL$, the $i^{\rm th}$ order statistic of $\lambda$ standard normal distributions $\Nlambda[i]$, and $\Psi_{i:\lambda}$ its distribution. If $\bs{x} = \left(x_1, \cdots, x_n \right) \in \R^n$ is a vector, then $\left[x\right]_i$ will be its value on the $i^{th}$ dimension, that is $\left[x\right]_i = x_i$. A random variable $\bs{X}$ distributed according to a law $\mathcal{L}$ will be denoted $\bs{X} \sim \mathcal{L}$.

\section{The $(1, \lambda)$-CSA-ES} \label{sec:CSA}

We denote with $\x$ the parent at the $t^{th}$ iteration.
From the parent point $\x$, $\lambda$ children are generated: $\yi = \x + \sig[t] \randomstep$ with $i \in [[1, \lambda]]$, and $\randomstep \sim \NL, \hsp (\randomstep)_{i \in [[1, \lambda]]} \hsp$ i.i.d.  Due to the $(1,\lambda)$ selection scheme, from these children, the one minimizing the function $f$ is selected: $\xplus = \argmin \{f(\bs{Y}), \bs{Y} \in \{{\y{1}, ..., \y{\lambda}\}}\}$.
This latter equation implicitly defines the random variable $\chosenstep$ as
\begin{equation}
\xplus = \x + \sig[t] \chosenstep \label{eq:selection2}
\enspace.
\end{equation}
In order to adapt the step-size, the cumulative path is defined as
\begin{equation}
\pplus = (1-c)\p + \sqrt{c(2-c)}\, \chosenstep \label{eq:chemin}
\end{equation}
with $0<c\leq 1$. The constant $1/c$ represents the life span of the information contained in $\p$, as after $1/c$ generations $\p$ is multiplied by a factor that approaches $1/e \approx 0.37$ for $c\to0$ from below (indeed $(1-c)^{1/c} \leq \exp(-1)$). The typical value for $c$ is between $1/\sqrt{n}$ and $1/n$. We will consider that $\bs{p}_0 \sim \NL$ as it makes the algorithm easier to analyze.

The normalization constant $\sqrt{c(2-c)}$ in front of $\chosenstep$ in Eq.~\eqref{eq:chemin} is  chosen so that under random selection and if $\p$ is distributed according to $\NL$ then also $\pplus$ follows $\NL$. Hence the length of the path can be compared to the expected length of $\| \NL \|$ representing the expected length under random selection. 

The step-size update rule increases the step-size if the length of the path is larger than the length under random selection and decreases it if the length is shorter than under random selection: 
$$
\sigplus = \sig[t] \exp\left({\frac{c}{d_{\sigma}}\left(\frac{\|\pplus\|}{E(\|\NL\|)} - 1\right)}\right) 
$$
where the damping parameter $d_{\sigma}$ determines how much the step-size can change and is set to $d_{\sigma}=1$. 
A simplification of the update considers the squared length of the path~\cite{arnold2002random}:
\begin{equation}
\sigplus = \sig[t] \stepsizechangelong. \label{eq:pas}
\end{equation}
This rule is easier to analyse and we will use it throughout the paper.

\paragraph{Preliminary results on linear functions.}

Selection on the linear function, $f(\xx)=[\xx]_{1}$, is determined by  $\firstdim{\x} + \sig\firstdim{\chosenstep}  \leq  \firstdim{\x} + \sig\firstdim{\randomstep} $ for all $i$ which is equivalent to $ \firstdim{\chosenstep}  \leq  \firstdim{\randomstep} $ for all $i$ where by definition $\firstdim{\randomstep}$ is distributed according to $\NN$. Therefore the first coordinate of the selected step is distributed according to $\Nmin$ and all others coordinates are distributed according to $\NN$, i.e. selection does not bias the distribution along the coordinates $2,\ldots,n$. Overall we have the following result.
\begin{lemma}\label{lem:selectedstep}
On the linear function $f(\xx)=x_{1}$, the selected steps $(\chosenstep)_{t\in \N}$ of the $(1,\lambda)$-ES are i.i.d. and distributed according to the vector $\chosenstepdistribution:=(\Nmin,\NNN_{2},\ldots,\NNN_{n})$ where $\NNN_{i} \sim \NN$ for $i \geq 2$.
\end{lemma}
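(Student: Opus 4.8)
The plan is to unwind the selection rule coordinate by coordinate and then invoke the i.i.d. structure of the children. First I would fix an iteration $t$ and recall that the children are $\yi = \x + \sig \randomstep$ with the $\randomstep$ i.i.d. $\sim \NL$. Since $f(\xx) = x_1$ and $\sig > 0$, the selected child is the one minimizing $\firstdim{\yi} = \firstdim{\x} + \sig \firstdim{\randomstep}$, which, after subtracting the common term $\firstdim{\x}$ and dividing by $\sig$, is equivalent to selecting the index $i^\star = \argmin_i \firstdim{\randomstep}$. Crucially, $i^\star$ depends only on the first coordinates $(\firstdim{\randomstep[1]}, \ldots, \firstdim{\randomstep[\lambda]})$, which are i.i.d. $\sim \NN$.

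Next I would decompose $\chosenstep = \randomstep[i^\star]$ into its first coordinate and its remaining $n-1$ coordinates. For the first coordinate, $\fchosenstep = \min_i \firstdim{\randomstep}$ is by definition the minimum of $\lambda$ i.i.d. standard normals, hence distributed according to $\Nmin$ with distribution $\Psi_{1:\lambda}$. For the other coordinates, the key observation is independence: within each child, $\firstdim{\randomstep}$ is independent of $(\left[\randomstep\right]_2, \ldots, \left[\randomstep\right]_n)$ because $\randomstep \sim \NL$ has independent components; therefore conditioning on the event $\{i^\star = i\}$ (an event measurable with respect to the first coordinates alone) does not change the law of $(\left[\randomstep[i]\right]_2, \ldots, \left[\randomstep[i]\right]_n)$, which remains $\NNN_2, \ldots, \NNN_n$ i.i.d. $\sim \NN$ and independent of $\fchosenstep$. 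Summing over the disjoint events $\{i^\star = i\}$, $i = 1, \ldots, \lambda$, gives that $\chosenstep \sim (\Nmin, \NNN_2, \ldots, \NNN_n)$.

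Finally, for the i.i.d.-in-$t$ claim, I would note that the vector $\chosenstep$ is a fixed measurable function of the family $(\randomstep)_{i \in [[1,\lambda]]}$ alone — it does not depend on $\x$ or $\sig$, since both the common shift $\firstdim{\x}$ and the scaling $\sig$ cancel in the selection comparison and $\chosenstep$ is literally one of the $\randomstep$. Since the families $(\randomstep)_{i}$ are themselves i.i.d. across $t$ by assumption, the $(\chosenstep)_{t \in \N}$ are i.i.d., each with the stated law $\chosenstepdistribution$.

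I expect the only subtle point to be the independence argument in the second paragraph: one must be careful that the selection index is a function of the first coordinates only, so that conditioning on it leaves the other coordinates' distribution untouched and keeps them independent of the selected first coordinate. Everything else is bookkeeping, so I would state that step carefully and treat the rest briefly.
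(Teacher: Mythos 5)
Your proposal is correct and follows essentially the same route as the paper's argument (given in the paragraph preceding the lemma): the common shift $\firstdim{\x}$ and the positive scale $\sig$ cancel in the selection comparison, so the selected index depends only on the first coordinates, making $\fchosenstep$ the minimum of $\lambda$ i.i.d.\ standard normals while leaving the remaining coordinates unbiased, and the i.i.d.-in-$t$ property follows since $\chosenstep$ is a function of the i.i.d.\ family $(\randomstep)_{i}$ alone. You spell out the conditioning-on-$\{i^\star=i\}$ step more explicitly than the paper does, which is a welcome precision rather than a deviation.
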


Because the selected steps $\chosenstep$ are i.i.d.\ the path defined in Eq.~\ref{eq:chemin} is an autonomous Markov chain, that we will denote $\pchain = (\p)_{t\in \N}$. Note that if the distribution of the selected step depended on $(\x,\sigma_{t})$ as it is generally the case on non-linear functions, then the path alone would not be a Markov Chain, however $(\x,\sigma_{t},\p)$ would be an autonomous Markov Chain. In order to study whether the $(1,\lambda)$-CSA-ES diverges geometrically, we investigate the log of the step-size change, whose formula can be immediately deduced from Eq.~\ref{eq:pas}:
\begin{equation}\label{eq:stepsizechange}
 \ln\left( \frac{\sigma_{t+1}}{\sig} \right)   =  \frac{c}{2d_\sigma}  
\left( \frac{\norm{\pplus}^2}{n} - 1 \right)  
\end{equation}
By summing up this equation from $0$ to $t-1$ we obtain
\begin{equation} \label{eq:sigcumul}
\inv{t}\ln\left(\frac{\sigma_{t}}{\sigma_0} \right) = \frac{c}{2d_\sigma}  \left( \inv{t}\sum_{k=1}^{t} \frac{\norm{\p[k]}^2}{n} - 1 \right) \enspace.
\end{equation}
We are interested to know whether $\inv{t}\ln ({\sigma_{t}/ \sigma_0} )$ converges to a constant. In case this constant is positive this will prove that the $(1,\lambda)$-CSA-ES diverges geometrically. We recognize thanks to \eqref{eq:sigcumul} that this quantity is equal to the sum of $t$ terms divided by $t$ that suggests the use of the law of large numbers to prove convergence of \eqref{eq:sigcumul}. We will start by investigating the case without cumulation $c=1$ (Section~\ref{sec:withoutcumulation}) and then the case with cumulation (Section~\ref{sec:withcumulation}).

\section{Divergence rate of $(1, \lambda)$-CSA-ES without cumulation} 

\label{sec:withoutcumulation}
In this section we study the $(1,\lambda)$-CSA-ES without cumulation, i.e. $c=1$. In this case, the path always equals to the selected step, i.e.\ for all $t$, we have $\pplus = \chosenstep$. We have proven in Lemma~\ref{lem:selectedstep} that $\chosenstep$ are i.i.d.\ according to $\chosenstepdistribution$. This allows us to use the standard law of large numbers to find the limit of $\frac1t \ln(\sigma_{t}/\sigma_{0})$ as well as compute the expected log-step-size change.

\mc{\Deltasigma}{\Delta_\sigma}
\begin{proposition} \label{pr:sigrate}
Let $\Deltasigma := \frac{1}{2d_{\sigma}n} \left( \E \left( \Nlambda^2 \right) - 1 \right)$. On linear functions, the $(1,\lambda)$-CSA-ES without cumulation satisfies (i) almost surely
$ \lim_{t \to \infty}	\inv{t} \ln\left({\sig}/{\sig[0]}\right) = \Deltasigma 
$,
and (ii) for all $t\in\N$, 
$\E (\ln({\sigma_{t+1}}/{\sig})) = \Deltasigma
$.
\end{proposition}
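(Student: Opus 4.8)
The plan is to reduce everything to an empirical mean of i.i.d.\ random variables. The key observation, already noted in the excerpt, is that with $c=1$ the path recursion \eqref{eq:chemin} collapses to $\pplus = \chosenstep$, so the path at iteration $t+1$ is exactly the step selected at iteration $t$; and by Lemma~\ref{lem:selectedstep} the selected steps $(\chosenstep)_{t\in\N}$ are i.i.d.\ with common law $\chosenstepdistribution = (\Nmin,\NNN_2,\ldots,\NNN_n)$. I would first record the moment $\E(\norm{\chosenstepdistribution}^2) = \E(\Nlambda^2) + (n-1)\E(\NNN^2) = \E(\Nlambda^2) + n - 1$, which follows from linearity of expectation and the description of $\chosenstepdistribution$, and which is finite since $\Nlambda^2$ is dominated pointwise by a sum of $\lambda$ independent $\chi^2_1$ variables. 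Both parts of the proposition then fall out of this.

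\emph{Part (ii).} Take expectations in the log-step-size change \eqref{eq:stepsizechange} with $c=1$ and $\pplus=\chosenstep$, so that $\E(\ln(\sigplus/\sig)) = \frac{1}{2d_\sigma}\big(\E(\norm{\chosenstep}^2)/n - 1\big)$. Substituting $\E(\norm{\chosenstep}^2) = \E(\Nlambda^2)+n-1$ from above gives $\E(\ln(\sigplus/\sig)) = \frac{1}{2d_\sigma}\big((\E(\Nlambda^2)+n-1)/n - 1\big) = \frac{1}{2d_\sigma n}(\E(\Nlambda^2)-1) = \Deltasigma$, with no dependence on $t$. Note that only linearity of expectation is used here, not independence between the coordinates of $\chosenstepdistribution$.

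\emph{Part (i).} Start from the telescoped identity \eqref{eq:sigcumul}. With $c=1$ one has $\p[k] = \chosenstep[k-1]$ for $k\ge 1$, so $\inv{t}\sum_{k=1}^{t}\norm{\p[k]}^2 = \inv{t}\sum_{j=0}^{t-1}\norm{\chosenstep[j]}^2$ is an empirical mean of i.i.d.\ random variables (Lemma~\ref{lem:selectedstep}) whose common expectation $\E(\Nlambda^2)+n-1$ is finite. The strong law of large numbers then yields $\inv{t}\sum_{j=0}^{t-1}\norm{\chosenstep[j]}^2 \to \E(\Nlambda^2)+n-1$ almost surely, and inserting this limit back into \eqref{eq:sigcumul} gives $\inv{t}\ln(\sig/\sig[0]) \to \frac{1}{2d_\sigma}\big((\E(\Nlambda^2)+n-1)/n - 1\big) = \Deltasigma$ almost surely. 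The argument is conceptually short, and I expect no real obstacle: the only points requiring a little care are the integrability check $\E(\Nlambda^2)<\infty$ (needed both to take the expectation in Part (ii) and to invoke the SLLN in Part (i)) and the index bookkeeping $\p[k]=\chosenstep[k-1]$ when rewriting \eqref{eq:sigcumul} as a sum of i.i.d.\ terms.
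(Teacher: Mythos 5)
Your proposal is correct and follows essentially the same route as the paper's own proof: compute $\E(\norm{\chosenstepdistribution}^2)=\E(\Nlambda^2)+n-1$ from Lemma~\ref{lem:selectedstep}, take expectations in Eq.~\eqref{eq:stepsizechange} for part (ii), and apply the strong law of large numbers to the i.i.d.\ sum in Eq.~\eqref{eq:sigcumul} for part (i). The only cosmetic difference is the integrability check, where you dominate $\Nmin^2$ by a sum of $\lambda$ independent $\chi^2_1$ variables while the paper uses the cruder bound $\E(\Nlambda^2)\le\lambda^2$; both suffice.
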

\begin{proof} 
We have identified in Lemma~\ref{lem:selectedstep} that the first coordinate of $\chosenstep$ is distributed according to $\Nmin$ and the other coordinates according to $\mathcal{N}(0,1)$, hence $\E\left(\|\chosenstep\|^2\right)=\E\left( {\firstdim{\chosenstep}}^2  \right) + \sum_{i=2}^n \E\left( \left[\chosenstep\right]_i^2  \right) = \E\left( \Nlambda^2 \right) + n - 1  $. Therefore $\E\left(\|\chosenstep\|^2\right)/n - 1 = (\E \left( \Nlambda^2 \right) - 1)/n $.
By applying this to Eq.~\eqref{eq:stepsizechange}, we deduce that $\E( \ln(\sigma_{t+1}/\sigma_t) = 1/(2d_\sigma n)(\E(\Nmin^2) - 1)$. Furthermore, as $\E ( \Nlambda^2 ) \leq \E ((\lambda\NN)^2) = \lambda^2 < \infty$, we have $\E(\|\chosenstep\|^2) < \infty$. The sequence $(\|\chosenstep\|^2)_{t \in \N}$ being i.i.d according to Lemma~\ref{lem:selectedstep}, and being integrable as we just showed, we can apply the strong law of large numbers on Eq.~\eqref{eq:sigcumul}. We obtain
\begin{align*}
\inv{t}\ln \left({\sig \over \sigma_0} \right) &= \frac{1}{2d_\sigma}\left( \inv{t}\sum_{k=0}^{t-1}{\|\chosenstep[k]\|^2 \over n}  - 1  \right)\\
&\overset{a.s.}{\underset{t \rightarrow \infty}{\longrightarrow}} \frac{1}{2d_\sigma} \left( \frac{\E \left( \|\chosenstep[\cdot]\|^2 \right)}{n}   - 1   \right) = \frac{1}{2d_\sigma n} \left(  \E \left( \Nlambda^2 \right) - 1   \right) 
\end{align*}
~\\[-1.8\baselineskip]
\qed\end{proof}
The proposition reveals that the sign of $\left( \E \left( \Nlambda^2 \right) - 1 \right)$ determines whether the step-size diverges to infinity. In the following, we show that $\E \left( \Nlambda^2 \right)$ increases in $\lambda$ for $\lambda \geq 2$ and that the $(1,\lambda)$-ES diverges for $\lambda \geq 3$. For $\lambda=1$ and $\lambda=2$, the step-size follows a random walk on the log-scale. 

\begin{lemma} \label{lm:increasing_lambda_expectation}
	Let $(\mathcal{N}_i)_{i \in [[1, \lambda]]}$ be independent random variables, distributed according to $\NN$, and $\mathcal{N}_{i:\lambda}$ the $i^{th}$ order statistic of $(\mathcal{N}_i)_{i \in [[1, \lambda]]}$. Then $\E \left( \Nmin[1]^2 \right)=\E \left( \Nmin[2]^2 \right)= 1$. In addition, for all $\lambda \ge 2$, $\E \left( \Nmin[\lambda+1]^2 \right) > \E \left( \Nmin^2 \right)$. 
\end{lemma}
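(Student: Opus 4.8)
The plan is to reduce the claim to a one-dimensional conditioning identity. The two equalities are immediate. $\Nmin[1]$ has the law $\NN$, so $\E(\Nmin[1]^2)=1$. For $\lambda=2$, the symmetry of $\NN$ gives $\min(\mathcal{N}_1,\mathcal{N}_2)\overset{d}{=}-\max(\mathcal{N}_1,\mathcal{N}_2)$, hence $\E(\Nmin[2]^2)=\E(\mathcal{N}_{2:2}^2)$, while $\Nmin[2]^2+\mathcal{N}_{2:2}^2=\mathcal{N}_1^2+\mathcal{N}_2^2$ forces $\E(\Nmin[2]^2)=1$.

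For the inequality I would write $\Nmin[\lambda+1]=\min(\Nmin[\lambda],\mathcal{N}_{\lambda+1})$ with $\mathcal{N}_{\lambda+1}\sim\NN$ independent of $\Nmin[\lambda]$, condition on $\Nmin[\lambda]=m$, and use the elementary identity $\int_{-\infty}^{m}x^2\varphi(x)\,\diff x=\Phi(m)-m\varphi(m)$, where $\varphi$ and $\Phi$ are the standard normal density and cdf, to obtain
\[
\E\!\left(\Nmin[\lambda+1]^2 \,\middle|\, \Nmin[\lambda]=m\right)=m^2+h(m),\qquad h(m):=(1-m^2)\Phi(m)-m\varphi(m).
\]
Taking expectations (all finite since $\Nmin[\lambda]^2\le\sum_{i=1}^\lambda\mathcal{N}_i^2$) gives $\E(\Nmin[\lambda+1]^2)-\E(\Nmin[\lambda]^2)=\E\big(h(\Nmin[\lambda])\big)$ for every $\lambda\ge1$; in particular, the case $\lambda=1$ together with the first part yields $\E(h(\NN))=\E(\Nmin[2]^2)-\E(\Nmin[1]^2)=0$. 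It then suffices to prove that $\E\big(h(\Nmin[\lambda])\big)>0$ for $\lambda\ge2$.

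This is a Chebyshev-type correlation argument. Differentiating, $h'(m)=-2m\Phi(m)$, so $h$ increases on $\R_-$ and decreases on $\R_+$; combined with $h(0)=1/2$, $\lim_{m\to-\infty}h(m)=0$ and $\lim_{m\to+\infty}h(m)=-\infty$, this shows $h$ is positive on $(-\infty,m^*)$ and negative on $(m^*,+\infty)$ for a unique $m^*>0$. On the other hand $\Nmin[\lambda]$ has density $\lambda(1-\Phi(m))^{\lambda-1}\varphi(m)$, so its law is absolutely continuous with respect to that of $\NN$ with likelihood ratio $w_\lambda(m):=\lambda(1-\Phi(m))^{\lambda-1}$, which is nonincreasing and, for $\lambda\ge2$, strictly decreasing. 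Hence
\[
\E\big(h(\Nmin[\lambda])\big)=\E\big(h(\NN)\,w_\lambda(\NN)\big)=\E\big(h(\NN)\big(w_\lambda(\NN)-w_\lambda(m^*)\big)\big)+w_\lambda(m^*)\,\E(h(\NN)),
\]
where the second term vanishes and, for $\lambda\ge2$, the integrand of the first is nonnegative everywhere and strictly positive for $\NN\neq m^*$ (on $\{\NN<m^*\}$ both factors are positive, on $\{\NN>m^*\}$ both are negative). This gives $\E\big(h(\Nmin[\lambda])\big)>0$, hence $\E(\Nmin[\lambda+1]^2)>\E(\Nmin[\lambda]^2)$ for $\lambda\ge2$.

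The main obstacle is the middle step: getting the clean single-sign-change picture of $h$ right (the formula $h'=-2m\Phi(m)$ and the two boundary limits, so that $h$ is positive then negative), and then noticing that the density of $\Nmin[\lambda]$ relative to $\NN$ is the monotone function $w_\lambda$ --- which is exactly what turns $\E(h(\Nmin[\lambda]))$ into the correlation of a sign-changing function with a monotone one. The remaining inequalities are routine.
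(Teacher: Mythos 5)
Your proof is correct, and it takes a genuinely different route from the paper's. The paper (which defers details to its technical report) argues directly on the joint law of $U\sim\Psi_{1:\lambda+1}$ and $V\sim\Psi_{1:\lambda}$, pairing each event where $U^2<V^2$ with a counterbalancing event via the symmetry of the normal distribution, and then exhibiting a residual set of events where $U^2>V^2$ to get strictness. You instead exploit the recursion $\mathcal{N}_{1:\lambda+1}=\min(\mathcal{N}_{1:\lambda},\mathcal{N}_{\lambda+1})$ to reduce everything to the scalar identity $\E(\mathcal{N}_{1:\lambda+1}^2)-\E(\mathcal{N}_{1:\lambda}^2)=\E\bigl(h(\mathcal{N}_{1:\lambda})\bigr)$ with $h(m)=(1-m^2)\Phi(m)-m\varphi(m)$, and then run a single-crossing/monotone-likelihood-ratio (Chebyshev correlation) argument: $h$ changes sign exactly once, the density ratio $w_\lambda(m)=\lambda(1-\Phi(m))^{\lambda-1}$ of $\mathcal{N}_{1:\lambda}$ against $\mathcal{N}(0,1)$ is strictly decreasing for $\lambda\ge2$, and the anchor $\E(h(\mathcal{N}(0,1)))=0$ falls out of the already-established equality $\E(\mathcal{N}_{1:2}^2)=\E(\mathcal{N}_{1:1}^2)$ rather than a separate computation, which is a nice touch. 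I checked the key identities: $\int_{-\infty}^{m}x^2\varphi(x)\,\diff x=\Phi(m)-m\varphi(m)$, $h'(m)=-2m\Phi(m)$, $h(0)=1/2$, and the limits at $\pm\infty$ all hold, and the final integrand $h(\mathcal{N})\bigl(w_\lambda(\mathcal{N})-w_\lambda(m^*)\bigr)$ is indeed nonnegative and almost surely positive, so strictness follows. What your approach buys is a fully self-contained, quantitative proof (it identifies the increment exactly as $\E(h(\mathcal{N}_{1:\lambda}))$ and would extend to any parent law with a monotone likelihood ratio against the standard normal); what the paper's symmetry-pairing buys is a shorter, more conceptual sketch that avoids any calculus on $h$, at the cost of delegating the bookkeeping of the event decomposition to the technical report. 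Your handling of the $\lambda=1,2$ equalities coincides with the paper's.
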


\begin{proof} (see \cite{chotard2012TRcumulative} for the full proof)
The idea of the proof is to use the symmetry of the normal distribution to show that for two random variables $U \sim \orderstatdis{\lambda+1}$ and $V \sim \orderstatdis{\lambda}$, for every event $E_1$ where $U^2 < V^2$, there exists another event $E_2$ counterbalancing the effect of $E_1$, i.e $\int_{E_2} (u^2-v^2)f_{U, V}(u,v) \diff \! u \diff \! v = \int_{E_1} (v^2-u^2)f_{U, V}(u,v) \diff \! u \diff \! v$, with $f_{U,V}$ the joint density of the couple $(U,V)$. We then have $\E \left( \Nmin[\lambda+1]^2 \right) \geq \E \left( \Nmin^2 \right)$. As there is a non-negligible set of events $E_3$, distinct of $E_1$ and $E_2$, where $U^2 > V^2$, we have $\E ( \Nmin[\lambda+1]^2 ) > \E ( \Nmin^2 )$.

For $\lambda = 1$, $\Nmin[1] \sim \NN$ so $\E(\Nmin[1]^2) = 1$. For $\lambda = 2$ we have $\E(\Nmin[2]^2+\mathcal{N}_{2:2}^2) = 2\E(\NN^2) = 2$, and since the normal distribution is symmetric $\E(\Nmin[2]^2) = \E(\mathcal{N}_{2:2}^2)$, hence $\E(\Nmin[2]^2) = 1$.
\qed\end{proof}
We can now link Proposition \ref{pr:sigrate} and Lemma \ref{lm:increasing_lambda_expectation} into the following theorem:

\begin{theorem} \label{th:convsig}
On linear functions, for $\lambda\ge3$, the step-size of the $(1, \lambda)$-CSA-ES without cumulation ($c=1$) diverges geometrically almost surely and in expectation at the rate $1/(2d_\sigma n)(\E(\Nmin^2) - 1)$, i.e.
\begin{equation}\label{eq:divas}
	\inv{t} \ln\left(\frac{\sig}{\sig[0]}\right) 
	\;
	\overset{a.s.}{\underset{t \rightarrow \infty}{\longrightarrow}} 
	\;
	\E\left(\ln\left(\frac{\sigma_{t+1}}{\sig}\right)\right)
	\;=\; 
	\frac{1}{2d_{\sigma}n} \left( \E \left( \Nlambda^2 \right) - 1 \right) 
	\enspace.
	\end{equation}

For $\lambda=1$ and $\lambda=2$, without cumulation, the logarithm of the step-size does an additive unbiased random walk i.e.
$\ln \sigma_{t+1} = \ln \sigma_{t} + W_{t} $ where $E[W_{t}]=0$. More precisely $W_{t} \sim 1/(2d_\sigma)(\chi_n^2/n - 1)$ for $\lambda=1$, and $W_{t} \sim 1/(2d_\sigma)((\Nmin[2]^2 + \chi_{n-1}^2)/n - 1)$ for $\lambda =2$, where $\chi_k^2$ stands for the chi-squared distribution with $k$ degree of freedom.
\end{theorem}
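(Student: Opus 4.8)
The plan is to assemble the two preceding results rather than to prove anything substantially new: Proposition~\ref{pr:sigrate} already supplies both the almost sure limit and the constant one-step expected change, and Lemma~\ref{lm:increasing_lambda_expectation} controls the sign of $\E(\Nlambda^2)-1$, so the theorem is essentially a corollary once the signs are pinned down and the $\lambda\in\{1,2\}$ distributions are made explicit.

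For $\lambda \ge 3$, the first step is to show $\Deltasigma > 0$. From Lemma~\ref{lm:increasing_lambda_expectation} we have $\E(\Nmin[2]^2)=1$ and $\E(\Nmin[\lambda+1]^2) > \E(\Nmin^2)$ for every $\lambda \ge 2$; chaining these inequalities upward from $\lambda=2$ shows that $\lambda \mapsto \E(\Nmin[\lambda]^2)$ is strictly increasing on $\{\lambda \ge 2\}$, hence $\E(\Nmin^2) > \E(\Nmin[2]^2) = 1$ for all $\lambda \ge 3$, i.e.\ $\Deltasigma = \frac{1}{2d_\sigma n}(\E(\Nlambda^2)-1) > 0$. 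Feeding this into Proposition~\ref{pr:sigrate}(i) gives $\inv{t}\ln(\sigma_t/\sigma_0) \to \Deltasigma > 0$ almost surely, which is precisely geometric divergence of $(\sigma_t)$, and Proposition~\ref{pr:sigrate} identifies that limit with the per-step expected change $\E(\ln(\sigma_{t+1}/\sig))$, giving~\eqref{eq:divas}. Divergence in expectation follows by summing the one-step identity: $\E(\ln\sigma_t) = \ln\sigma_0 + t\,\Deltasigma \to +\infty$.

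For $\lambda \in \{1,2\}$ the key simplification is that without cumulation ($c=1$) the path reduces to the selected step, $\pplus = \chosenstep$, so Eq.~\eqref{eq:stepsizechange} reads $\ln(\sigma_{t+1}/\sig) = W_t := \frac{1}{2d_\sigma}(\norm{\chosenstep}^2/n - 1)$. By Lemma~\ref{lem:selectedstep} the $\chosenstep$ are i.i.d., hence so are the $W_t$ and $(\ln\sigma_t)$ is an additive random walk. Moreover $\chosenstep \sim (\Nmin,\NNN_2,\ldots,\NNN_n)$ with independent coordinates, so $\norm{\chosenstep}^2 = \Nmin^2 + \sum_{i=2}^n \NNN_i^2 \sim \Nmin^2 + \chi_{n-1}^2$ with the two summands independent. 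For $\lambda=2$ this is already $W_t \sim \frac{1}{2d_\sigma}((\Nmin[2]^2 + \chi_{n-1}^2)/n - 1)$; for $\lambda=1$ one uses in addition that $\Nmin[1]\sim\NN$, so $\Nmin[1]^2 \sim \chi_1^2$ and the independent sum $\chi_1^2 + \chi_{n-1}^2$ is $\chi_n^2$, giving $W_t \sim \frac{1}{2d_\sigma}(\chi_n^2/n - 1)$. Unbiasedness $\E(W_t)=0$ is then immediate from $\E(W_t) = \frac{1}{2d_\sigma n}(\E(\Nmin^2)-1)$ combined with the equalities $\E(\Nmin[1]^2) = \E(\Nmin[2]^2) = 1$ of Lemma~\ref{lm:increasing_lambda_expectation}.

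I do not expect a serious obstacle here; the only points requiring a little care are the upgrade from the pairwise monotonicity in Lemma~\ref{lm:increasing_lambda_expectation} to strict positivity of $\Deltasigma$ for \emph{all} $\lambda \ge 3$ (handled by the chaining argument), and making sure the chi-squared reductions legitimately invoke the independence of the coordinates of $\chosenstep$ that Lemma~\ref{lem:selectedstep} provides.
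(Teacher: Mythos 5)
Your proposal is correct and follows essentially the same route as the paper's own proof: positivity of the rate via Lemma~\ref{lm:increasing_lambda_expectation} combined with Proposition~\ref{pr:sigrate} for $\lambda\ge3$, and the i.i.d.\ decomposition $\norm{\chosenstepdistribution}^2 = \Nmin^2 + \chi_{n-1}^2$ for the random-walk cases. Your explicit chaining of the pairwise inequality of Lemma~\ref{lm:increasing_lambda_expectation} up from $\lambda=2$ is a small point the paper leaves implicit, but otherwise the two arguments coincide.
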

\begin{proof}
For $\lambda > 2$, from Lemma~\ref{lm:increasing_lambda_expectation} we know that $\E(\Nmin^2) > \E(\Nmin[2]^2) = 1$. Therefore $\E(\Nmin^2) - 1 > 0$, hence Eq.~\eqref{eq:divas} is strictly positive, and with Proposition~\ref{pr:sigrate} we get that the step-size diverges geometrically almost surely at the rate $1/(2d_\sigma)(\E(\Nmin^2) - 1)$. 

With Eq.~\ref{eq:stepsizechange} we have $\ln(\sigma_{t+1}) = \ln(\sigma_t) + W_t$, with $W_t = 1/(2d_\sigma)(\|\chosenstep\|^2/n - 1)$. For $\lambda = 1$ and $\lambda = 2$, according to Lemma~\ref{lm:increasing_lambda_expectation}, $\E(W_t) = 0$. Hence $\ln(\sigma_t)$ does an additive unbiased random walk. Furthermore $\|\chosenstepdistribution\|^2 = \Nmin^2 + \chi_{n-1}^2$, so for $\lambda = 1$, since $\Nmin[1] = \NN$, $\|\chosenstepdistribution\|^2 = \chi_n^2$.
\qed\end{proof}

In \cite{chotard2012TRcumulative} we extend this result on the step-size to $|[\x]_1|$, which diverges geometrically almost surely at the same rate.

\section{Divergence rate of $(1, \lambda)$-CSA-ES with cumulation} \label{sec:withcumulation}

We are now investigating the $(1,\lambda)$-CSA-ES with cumulation, i.e. $0 < c < 1$.
The path $\pchain$ is then a Markov chain and contrary to the case where $c=1$ we cannot apply a LLN for independent variables to Eq.~\eqref{eq:sigcumul} in order to prove the almost sure geometric divergence. However LLN for Markov chains exist as well, provided the Markov chain satisfies some stability properties: in particular, if the Markov chain $\pchain$ is $\varphi$-irreducible, that is, there exists a measure $\varphi$ such that every Borel set  $A$ of $\R^{n}$  with $\varphi(A)>0$ has a positive probability to be reached in a finite number of steps  by $\pchain$ starting from any $\bs{p}_{0} \in \R^{n}$. In addition, the chain $\pchain$ needs to be (i) positive, that is the chain admits an invariant probability measure $\pi$, i.e., for any borelian $A$, $\pi(A) = \int_{\R^n} P(x, A) \pi(A)$ with $P(x,A)$ being the probability to transition in one time step from $x$ into $A$, and (ii) Harris recurrent which means for any borelian $A$ such that $\varphi(A) > 0$, the chain $\pchain$ visits $A$ an infinite number of times with probability one. Under those conditions, $\pchain$ satisfies a LLN, more precisely:
\begin{lemma}\cite[17.0.1]{markovtheory} \label{llnmarkovchains}
Suppose that $\pchain$ is a positive Harris chain with invariant probability measure $\pi$, and let $g$ be a $\pi$-integrable function such that \\ $\pi(|g|) = \int_{\R^n} |g(x)| \pi(dx) < \infty$. Then $1/t \sum_{k=1}^t g(\p[k]) \overset{a.s}{\underset{t\rightarrow \infty}{\longrightarrow}} \pi(g)$.
\end{lemma}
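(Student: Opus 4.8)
The statement is the classical ergodic theorem for positive Harris chains, so the plan is to reproduce the standard regeneration proof (as in Meyn and Tweedie). The first step is to reduce to the atomic case. Since $\pchain$ is $\varphi$-irreducible it admits a small set $C$: there are $m \in \N$, $\delta > 0$ and a probability measure $\nu$ on $\R^n$ with $P^m(x,\cdot) \ge \delta\,\nu(\cdot)$ for all $x \in C$. Applying the Nummelin splitting construction one builds a split chain $\check\pchain$ on $\R^n \times \{0,1\}$ that possesses an accessible atom $\alpha$, whose first-coordinate marginal reproduces the law of $\pchain$, and which inherits positivity and Harris recurrence from $\pchain$; moreover its invariant probability measure projects onto $\pi$, and it suffices to prove the law of large numbers for $\check\pchain$ and the lifted function $\check g(x,i) := g(x)$.

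Next I would exploit the atom via the strong Markov property. Let $\tau(1) < \tau(2) < \cdots$ denote the successive visit times of $\check\pchain$ to $\alpha$; Harris recurrence makes each $\tau(j)$ almost surely finite, and positivity, through Kac's formula, gives $\E_\alpha[\tau(j+1) - \tau(j)] < \infty$. Because every excursion starts afresh from $\alpha$, the pairs $\big(\tau(j+1)-\tau(j),\, S_j\big)$ with $S_j := \sum_{k=\tau(j)+1}^{\tau(j+1)} \check g(\check\pp_k)$ form an i.i.d. sequence for $j \ge 1$; the hypothesis $\pi(|g|) < \infty$ translates, again via Kac's formula, into $\E_\alpha[|S_j|] < \infty$, and the ratio representation of the invariant measure yields $\pi(g) = \E_\alpha[S_1] / \E_\alpha[\tau(2)-\tau(1)]$.

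The last step is a routine application of the ordinary strong law of large numbers to the two i.i.d. sequences $(S_j)_j$ and $(\tau(j+1)-\tau(j))_j$, combined with a sandwiching argument: writing $N_t := \max\{j : \tau(j) \le t\}$, one bounds $\sum_{k=1}^t \check g(\check\pp_k)$ between sums over $N_t-1$ and $N_t$ complete excursions plus a boundary term controlled because $\tau(j)/j \to \E_\alpha[\tau(2)-\tau(1)]$ a.s., whence $N_t/t \to 1/\E_\alpha[\tau(2)-\tau(1)]$ a.s. Dividing by $t$ gives convergence of $\frac1t\sum_{k=1}^t \check g(\check\pp_k)$ to $\pi(g)$, and projecting back to $\pchain$ finishes the proof. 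I expect the main obstacle to be the splitting construction together with the verification that positivity, Harris recurrence and the invariant measure transfer between $\pchain$ and $\check\pchain$; the renewal/LLN part is then essentially bookkeeping. A shortcut would be to invoke the Chacon--Ornstein ratio ergodic theorem instead of building the atom by hand, but the regeneration argument is the most self-contained route.
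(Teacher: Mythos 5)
The paper does not actually prove this lemma: it is quoted verbatim as Theorem 17.0.1 of Meyn and Tweedie (the cited \texttt{markovtheory} reference), so there is no internal proof to compare against. Your regeneration sketch is precisely the argument used in that reference --- Nummelin splitting to manufacture an accessible atom, Kac's formula to identify $\pi(g)$ as a ratio of expected excursion sums, the classical strong law applied to the i.i.d.\ blocks, and a sandwich over the last incomplete block --- and it is correct in outline; the only points demanding care in a full write-up are the case $m>1$ in the minorization (where one must split the $m$-skeleton or the resolvent chain rather than the one-step kernel) and the disposal of the initial segment before the first visit to the atom, both of which are routine given Harris recurrence.
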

The path $\pchain$ satisfies the conditions of Lemma~\ref{llnmarkovchains} and exhibits an invariant measure \cite{chotard2012TRcumulative}.
By a recurrence on Eq.~\eqref{eq:chemin} we see that the path follows the following equation
\begin{equation} \label{eq:path}
\p = (1-c)^{t}\bs{p}_0 + \sqrt{c(2-c)}\sum_{k=0}^{t-1} (1-c)^k \underbrace{\chosenstep[t-1-k]}_{\text{i.i.d.}} \enspace .
\end{equation}
For $i \neq 1$, $[\chosenstep]_i \sim \NNN(0, 1)$ and,
as also $[\bs{p}_0]_i \sim \NNN(0,1)$,
by recurrence $[\bs{p_t}]_i \sim \NNN(0,1)$ for all $t \in \N$. 
For $i=1$ with cumulation ($c <1$), the influence of $[\bs{p}_0]_1$ vanishes with $(1-c)^{t}$. Furthermore, as from Lemma~\ref{lem:selectedstep} the sequence $(\fchosenstep])_{t \in \N}$ is independent, we get by applying the Kolgomorov's three series theorem that the series $\sum_{k=0}^{t-1} (1-c)^k \fchosenstep[t-1-k]$ converges almost surely. Therefore, the first component of the path becomes distributed as the random variable $[\bs{p}_\infty]_1 = \sqrt{c(2-c)} \sum_{k=0}^\infty (1-c)^k [\chosenstep[k]]_1$ (by re-indexing the variable $\bs{\xi}_{t-1-k}^\star$ in $\bs{\xi}_k^\star$, as the sequence $(\chosenstep)_{t \in \N}$ is i.i.d.). 

We now obtain geometric divergence of the step-size and get an explicit estimate of the expression of the divergence rate.
\begin{theorem} \label{th:geometricdivergencecumul} 
The step-size of the $(1,\lambda)$-CSA-ES with $\lambda\ge2$ diverges geometrically fast if $c<1$ or $\lambda\ge3$. Almost surely and in expectation we have for $0<c\le1$,
\begin{equation}\label{eq:sigcumulconv}\inv{t} \ln\left( {\sig \over \sig[0]} \right) \overset{}{\underset{t \rightarrow \infty}{\longrightarrow}} 
\frac{1}{2d_\sigma n} 
\underbrace{
\left( 2(1-c)\,\E\left(\Nlambda\right)^2  + c\left(\E\left(\Nlambda^2\right) - 1\right) \right)
}_{>0 \mbox{ for } \lambda\ge3 \mbox{ and for } \lambda = 2 \mbox{ and } c < 1 } 
\enspace.
\end{equation}

\end{theorem}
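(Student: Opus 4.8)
The plan is to start from Eq.~\eqref{eq:sigcumul}, namely $\inv{t}\ln(\sig/\sig[0]) = \tfrac{c}{2d_\sigma}\big(\inv{t}\sum_{k=1}^{t}\norm{\p[k]}^2/n - 1\big)$, and apply the law of large numbers for Markov chains (Lemma~\ref{llnmarkovchains}) to the function $g(\bs{p})=\norm{\bs{p}}^2$. Since, as recalled just before the theorem, $\pchain$ is a positive Harris chain with invariant probability measure $\pi$, the only thing left to verify is that $\pi(|g|)=\pi(\norm{\cdot}^2)<\infty$. This follows from the explicit description of the stationary law: under $\pi$ the coordinates $i\ge 2$ are $\NN$ (hence with finite second moment), while the first coordinate is the series $[\bs{p}_\infty]_1=\sqrt{c(2-c)}\sum_{k\ge0}(1-c)^k\firstdim{\chosenstep[k]}$, a sum of independent terms with finite variance, which converges in $L^2$; by the stationarity (self-similarity) equation $[\bs{p}_\infty]_1\overset{d}{=}(1-c)[\bs{p}_\infty]_1+\sqrt{c(2-c)}\firstdim{\chosenstep}$ one identifies $\pi$ with the law of $\bs{p}_\infty$. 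Lemma~\ref{llnmarkovchains} then gives $\inv{t}\sum_{k=1}^{t}\norm{\p[k]}^2\overset{a.s.}{\longrightarrow}\pi(\norm{\cdot}^2)=\E(\norm{\bs{p}_\infty}^2)$, so everything reduces to computing $\E(\norm{\bs{p}_\infty}^2)$.

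To compute $\E(\norm{\bs{p}_\infty}^2)$ I would split coordinate-wise. The coordinates $i\ge2$ contribute $1$ each, hence $n-1$ in total. For the first coordinate, I expand the square of the series and use that the $\firstdim{\chosenstep[k]}$ are i.i.d.\ $\sim\Nlambda$ (Lemma~\ref{lem:selectedstep}): the diagonal terms contribute $\E(\Nlambda^2)\sum_{k\ge0}(1-c)^{2k}=\E(\Nlambda^2)/(c(2-c))$ and the off-diagonal terms contribute $2\E(\Nlambda)^2\sum_{0\le k<\ell}(1-c)^{k+\ell}=2\E(\Nlambda)^2(1-c)/(c^2(2-c))$; multiplying by the normalization $c(2-c)$ yields $\E([\bs{p}_\infty]_1^2)=\E(\Nlambda^2)+\tfrac{2(1-c)}{c}\E(\Nlambda)^2$. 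Hence $\E(\norm{\bs{p}_\infty}^2)/n-1=\tfrac1n\big(\E(\Nlambda^2)-1+\tfrac{2(1-c)}{c}\E(\Nlambda)^2\big)$, and substituting into Eq.~\eqref{eq:sigcumul} produces exactly the right-hand side of Eq.~\eqref{eq:sigcumulconv}.

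For the "in expectation" statement I would instead compute $\E(\norm{\p}^2)$ directly from the explicit form~\eqref{eq:path}: the cross terms involving $\bs{p}_0$ vanish because $\E(\firstdim{\bs{p}_0})=0$ and $\firstdim{\bs{p}_0}$ is independent of the selected steps, and the remaining finite geometric sums converge, as $t\to\infty$, to the same expression, so $\E(\norm{\p}^2)\to\E(\norm{\bs{p}_\infty}^2)$; a Cesàro argument then gives $\E(\inv{t}\ln(\sig/\sig[0]))=\tfrac{c}{2d_\sigma}\big(\inv{t}\sum_{k=1}^t\E(\norm{\p[k]}^2)/n-1\big)\to$ the announced rate (in fact $\E(\ln(\sigplus/\sig))$ already converges to it). Finally, for the positivity claim under the brace: for $\lambda\ge3$, Lemma~\ref{lm:increasing_lambda_expectation} gives $\E(\Nlambda^2)>\E(\Nmin[2]^2)=1$, so $c(\E(\Nlambda^2)-1)>0$ while $2(1-c)\E(\Nlambda)^2\ge0$; for $\lambda=2$ and $c<1$, the first term vanishes but $\E(\Nlambda)=\E(\min(\mathcal{N}_1,\mathcal{N}_2))\neq 0$, so $2(1-c)\E(\Nlambda)^2>0$.

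\textbf{Main obstacle.} The delicate point is the Markov-chain step: establishing (or invoking from \cite{chotard2012TRcumulative}) that $\pchain$ is $\varphi$-irreducible, positive and Harris recurrent with an invariant law having a finite second moment, so that Lemma~\ref{llnmarkovchains} genuinely applies to $g=\norm{\cdot}^2$, together with the identification of that invariant law as the law of $\bs{p}_\infty$. Once this is in place, the remaining moment computations — including the interchange of expectation with the infinite sum, justified by $L^2$-convergence — are routine.
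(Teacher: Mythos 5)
Your proposal is correct and follows essentially the same route as the paper: the almost-sure part is delegated to the Markov-chain LLN with the Harris/positivity verification deferred to \cite{chotard2012TRcumulative}, the second-moment computation via the diagonal and off-diagonal geometric sums yields the same limit $\E(\Nlambda^2)+\tfrac{2(1-c)}{c}\E(\Nlambda)^2$ for the first coordinate, the expectation statement uses the same Cesàro argument, and the positivity discussion matches (for $\lambda=2$, $\E(\Nmin[2]^2)=1$ and $\E(\Nmin[2])<0$). The only cosmetic difference is that you phrase the computation as $\pi(\norm{\cdot}^2)=\E(\norm{\bs{p}_\infty}^2)$ via the stationary law, whereas the paper takes $\lim_{t\to\infty}\E(\norm{\pplus}^2)$ directly; the arithmetic is identical.
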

\begin{proof}
For proving almost sure convergence of $\ln( {\sig / \sig[0]}) / t$ we need to use the LLN for Markov chain. We refer to \cite{chotard2012TRcumulative} for the proof that $\pchain$ satisfies the right assumptions. We now focus on the convergence in expectation.
From Eq.~\eqref{eq:stepsizechange} we have $\E (\ln(\sigma_{t+1} / \sigma_t)) = c/(2d_\sigma)(\E(\|\pplus\|^2)/n - 1)$, so $\E(\|\pplus\|^2) =  \E(\sum_{i=1}^n \pdim[i]^2) $ is the term we have to analyse. From Eq.~\eqref{eq:path} and its conclusions we get that for $\j \neq 1$ $[\bs{p}_t]_j \sim \NN$, so $\E(\sum_{j=1}^n \pdim[j]^2) = \E(\pdim[1]^2) + (n-1)$. When $t$ goes to infinity, the influence of $[\bs{p}_0]_1$ in this equation goes to $0$ with $(1-c)^{t+1}$, so we can remove it when taking the limit:
\begin{equation}
\lim_{t \rightarrow \infty}\E\left( \pdim[1]^2 \right) = \lim_{t \rightarrow \infty} \E\bigg( \bigg(\sqrt{c(2-c)}  \sum_{i=0}^t (1-c)^i \firstdim{\chosenstep[t-i]} \bigg)^2\bigg)
\end{equation}
We will now develop the sum with the square, such that we have either a product $\firstdim{\chosenstep[t-i]} \firstdim{\chosenstep[t-j]}$ with $i \neq j$, or $\firstdim{\chosenstep[t-j]}^2$. This way, we can separate the variables by using Lemma~\ref{lem:selectedstep} with the independence of $\chosenstep[i]$ over time. To do so, we use the development formula $(\sum_{i=1}^n a_n)^2 = 2 \sum_{i=1}^n \sum_{j=i+1}^n a_i a_j + \sum_{i=1}^n a_i^2$. We take the limit of $\E( \pdim[1]^2 )$ and find that it is equal to
\begin{equation} \label{eq:devexppathsquare}
 \lim_{t \rightarrow \infty}c(2-c) \! \left( 2\sum_{i=0}^t \! \sum_{j=i+1}^t\!\! (1 \! - \! c)^{i+j} \!\!\!\! \underbrace{\E\left( \firstdim{\chosenstep[t-i]} \firstdim{\chosenstep[t-j]} \right)}_{=\E \firstdim{\chosenstep[t-i]} \E \firstdim{\chosenstep[t-j]}=\E [\Nmin]^{2}} \!\!\!\! + \sum_{i=0}^t (1 \! - \! c)^{2i} \underbrace{\E\left(\firstdim{\chosenstep[t-i]}^2\right)}_{=\E[\Nmin^2]} \right)
\end{equation}
Now the expected value does not depend on $i$ or $j$, so what is left is to calculate $\sum_{i=0}^t \sum_{j=i+1}^t (1-c)^{i+j}$ and $\sum_{i=0}^t (1-c)^{2i}$. We have $\sum_{i=0}^t \sum_{j=i+1}^t (1-c)^{i+j} = \sum_{i=0}^t (1-c)^{2i+1} \frac{1-(1-c)^{t-i}}{1-(1-c)}$ and when we separates this sum in two, the right hand side goes to $0$ for $t\to\infty$. Therefore, the left hand side converges to $\lim_{t \rightarrow \infty} \sum_{i=0}^t (1-c)^{2i+1}/c$, which is equal to $\lim_{t \rightarrow \infty} (1-c)/c \sum_{i=0}^t (1-c)^{2i}$. And $\sum_{i=0}^t (1-c)^{2i}$ is equal to $(1-(1-c)^{2t+2})/(1-(1-c)^2)$, which converges to $1/(c(2-c))$. So, by inserting this in Eq.~\eqref{eq:devexppathsquare} we get that $\E\left( \pdim[1]^2 \right) \underset{t \rightarrow \infty}{\longrightarrow} 2\frac{1-c}{c} \E\left(\Nmin\right)^2 + \E\left(\Nmin^2\right)$, which gives us the right hand side of Eq.~\eqref{eq:sigcumulconv}.

By summing $\E(\ln (\sigma_{i+1}/\sigma_i))$ for $i=0,\dots, t-1$ and dividing by $t$ we have the Cesaro mean $1/t \E(\ln (\sigma_{t}/\sigma_0))$ that converges to the same value that $\E(\ln (\sigma_{t+1}/\sigma_t))$ converges to when $t$ goes to infinity. Therefore we have in expectation Eq.~\eqref{eq:sigcumulconv}.

According to Lemma~\ref{lm:increasing_lambda_expectation}, for $\lambda = 2$, $\E(\Nmin[2]^2) = 1$, so the RHS of Eq.~\eqref{eq:sigcumulconv} is equal to $(1-c)/(d_\sigma n) \E(\Nmin[2])^2$. The expected value of $\Nmin[2]$ is strictly negative, so the previous expression is strictly positive. Furthermore, according to Lemma~\ref{lm:increasing_lambda_expectation}, $\E(\Nmin^2)$ increases with $\lambda$, as does $\E(\Nmin[2])^2$. Therefore we have geometric divergence for $\lambda \geq 2$.
\qed\end{proof}

From Eq.~\eqref{eq:selection2} we see that the behavior of the step-size and of $\suite{\bs{X}}$ are directly related. Geometric divergence of the step-size, as shown in Theorem~\ref{th:geometricdivergencecumul}, means that also the movements in search space and the improvements on affine linear functions $f$ increase geometrically fast. Therefore, as we showed in Theorem~\ref{th:geometricdivergencecumul} geometric divergence for the step-size when $\lambda \geq 2$ and $c < 1$, or when $\lambda \geq 3$, we expect geometric divergence on the first dimension of $\suite{\bs{X}}$ (the first dimension being the only dimension with selection pressure). Analyzing $\suite{\bs{X}}$ with cumulation requires to study a double Markov chain, which is left to possible future research.

\section{Study of the variations of $\ln\left({\sig[t+1]}/{\sig}\right)$} \label{sec:var}
The proof of Theorem~\ref{th:geometricdivergencecumul} shows that the step size increase converges to the right hand side of Eq.~\eqref{eq:sigcumulconv}, for $t\to\infty$. When the dimension increases this increment goes to zero, which also suggests that it becomes more likely that $\sigplus$ is smaller than $\sig$. To analyze this behavior, we study the variance of $\ln\left({\sig[t+1]}/{\sig}\right)$ as a function of $c$ and the dimension.

\begin{theorem} \label{th:var} The variance of $\ln\left({\sig[t+1]}/{\sig}\right)$ equals to
\begin{equation} \label{eq:var} \Var \left(\ln\left(\frac{\sigma_{t+1}}{\sigma_{t}}\right)\right) =
\frac{c^2}{4 d_\sigma^2 n^2}\left( \E\left(\firstdim{\pplus}^4 \right) - \E\left(\firstdim{\pplus}^2\right)^2 + 2(n-1) \right) 
\enspace.
\end{equation}
Furthermore, $\E\left(\pdim[1]^2\right) \underset{t \rightarrow \infty}{\longrightarrow} \E\left(\Nmin^2\right) + \frac{2-2c}{c}\E\left(\Nmin\right)^2$ and with $a = 1-c$ 
\begin{equation} \label{eq:devp4}
\lim_{t \to \infty}\E \left(\pdim[1]^4\right)= \frac{(1-a^2)^2}{1-a^4}\left( k_4 +k_{31} + k_{22} + k_{211} + k_{1111} \right) 
\enspace,
\end{equation}
where $k_{4}\!=\!\E\!\left(\Nmin^4\right)$, $k_{31}=  4\frac{a\left(1+a+2a^2\right)}{1-a^3}\E\left(\Nmin^3\right)\E\left(\Nmin\right)$, $k_{22}= 6\frac{a^2}{1-a^2}\E\left( \Nmin^2\right)^2$,\\ $k_{211}\!=\!12\frac{a^3(1+2a+3a^2)}{(1-a^2)(1-a^3)} \E\!\left(\Nmin^2\right)\! \E\!\left(\Nmin\right)^2$ and $k_{1111} = 24\frac{a^6}{(1-a)(1-a^2)(1-a^3)}\E\left(\Nmin\right)^4$.
\end{theorem}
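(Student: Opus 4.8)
The plan is to attack the three assertions in sequence: first the closed form \eqref{eq:var} for the variance, then the limit of $\E(\pdim[1]^2)$ (which is essentially a restatement of what was already established in the proof of Theorem~\ref{th:geometricdivergencecumul}), and finally the more involved fourth-moment computation \eqref{eq:devp4}. For the variance, I would start from Eq.~\eqref{eq:stepsizechange}, which gives $\ln(\sigplus/\sig) = \frac{c}{2d_\sigma}(\norm{\pplus}^2/n - 1)$, so that $\Var(\ln(\sigplus/\sig)) = \frac{c^2}{4d_\sigma^2 n^2}\Var(\norm{\pplus}^2)$. Then I would decompose $\norm{\pplus}^2 = \firstdim{\pplus}^2 + \sum_{i=2}^n \pdim[i]^2$; by the discussion following Eq.~\eqref{eq:path}, for $i\ge 2$ the coordinates $\pdim[i]$ are i.i.d.\ $\NN$ and, crucially, independent of $\firstdim{\pplus}$ (selection only couples the first coordinate). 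Hence $\Var(\norm{\pplus}^2) = \Var(\firstdim{\pplus}^2) + (n-1)\Var(\NN^2)$, and since $\Var(\NN^2)=2$ and $\Var(\firstdim{\pplus}^2) = \E(\firstdim{\pplus}^4) - \E(\firstdim{\pplus}^2)^2$, formula \eqref{eq:var} follows directly.

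The limit of $\E(\pdim[1]^2)$ was already derived inside the proof of Theorem~\ref{th:geometricdivergencecumul} (the expression $2\frac{1-c}{c}\E(\Nmin)^2 + \E(\Nmin^2)$), so here I would simply cite that computation, noting $\frac{2-2c}{c} = 2\frac{1-c}{c}$.

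The real work is \eqref{eq:devp4}. I would write $\firstdim{\pplus} = \sqrt{c(2-c)}\sum_{k\ge 0} a^k \firstdim{\chosenstep[k]}$ with $a = 1-c$, using that the $\firstdim{\chosenstep[k]}$ are i.i.d.\ copies of $\Nmin$ (Lemma~\ref{lem:selectedstep}, after re-indexing, and that the $[\bs p_0]_1$-term vanishes in the limit as in the previous proof). Expanding the fourth power of the infinite sum produces, by the multinomial theorem, five types of terms according to the partition pattern of the four indices: all four equal $(k_4)$, a $3{+}1$ split $(k_{31})$, a $2{+}2$ split $(k_{22})$, a $2{+}1{+}1$ split $(k_{211})$, and all four distinct $(k_{1111})$. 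Independence across time lets each term factor into a product of moments of $\Nmin$ of the appropriate orders, and what remains is a geometric-type multiple sum in the exponents of $a$. The normalization $(c(2-c))^2 = (1-a^2)^2$ combines with the sum over the ``all equal'' pattern, $\sum_k a^{4k} = 1/(1-a^4)$, to give the prefactor $(1-a^2)^2/(1-a^4)$; factoring this out of each of the other four multiple sums yields the stated rational functions of $a$ multiplying the $k$-coefficients. Concretely, one needs $\sum_{k} a^{4k}$, $\sum_{j\neq k} a^{3j+k}$ and its permutations, $\sum_{j<k} a^{2j+2k}$, $\sum$ over one repeated and two distinct indices, and $\sum$ over four distinct indices; each is a finite product of factors $1/(1-a^m)$ for $m\in\{1,2,3,4\}$ times a polynomial correction accounting for the ordering constraints, and the combinatorial multiplicities $4, 6, 12, 24$ count the number of index-assignments realizing each pattern.

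\textbf{Main obstacle.} The bookkeeping in the fourth-moment expansion is the crux: one must correctly enumerate the multiplicities of each partition pattern of four ordered indices and, more delicately, evaluate each constrained geometric multiple sum (e.g.\ $\sum_{0\le i<j} a^{2i+2j}$, or the triple sum with one index appearing twice) in closed form, where the inequality constraints among the summation indices produce the polynomial numerators such as $1+a+2a^2$ and $1+2a+3a^2$ rather than bare geometric factors. Care is also needed to justify interchanging expectation with the infinite sums (absolute convergence, via $|a|<1$ and finiteness of the moments $\E(\Nmin^k)$ for $k\le 4$, which holds since $|\Nmin|\le \sum_{i=1}^\lambda|\NN_i|$ has finite moments of all orders) and to confirm that the $(1-c)^{t}[\bs p_0]_1$ contributions and all mixed terms involving $[\bs p_0]_1$ vanish in the limit.
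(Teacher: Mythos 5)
Your proposal is correct and follows essentially the same route as the paper for the parts the paper actually proves. One small (and arguably cleaner) difference: for Eq.~\eqref{eq:var} you invoke additivity of the variance over the independent coordinates, $\Var(\norm{\pplus}^2)=\Var(\firstdim{\pplus}^2)+(n-1)\Var(\NN^2)$ with $\Var(\NN^2)=2$, whereas the paper expands $\E(\norm{\pplus}^4)$ and $\E(\norm{\pplus}^2)^2$ separately via the double-sum development and subtracts; both yield $\E(\firstdim{\pplus}^4)-\E(\firstdim{\pplus}^2)^2+2(n-1)$, and both rest on the same independence of $\firstdim{\pplus}$ from the remaining coordinates established after Eq.~\eqref{eq:path}. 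For $\E(\pdim[1]^2)$ you cite the computation in the proof of Theorem~\ref{th:geometricdivergencecumul}, exactly as the paper does. For Eq.~\eqref{eq:devp4} the paper defers entirely to the technical report, while you sketch the multinomial expansion of the fourth power of the geometric series of i.i.d.\ $\Nmin$-variables; your partition bookkeeping (multiplicities $1,4,6,12,24$ for the patterns $4$, $3{+}1$, $2{+}2$, $2{+}1{+}1$, $1{+}1{+}1{+}1$) and the constrained geometric sums are consistent with the stated coefficients (e.g.\ $\sum_{i\neq j}a^{3i+j}=a(1+a+2a^2)/\bigl((1-a^3)(1-a^4)\bigr)$ reproduces $k_{31}$ after factoring out $1/(1-a^4)$), so the sketch would go through once the sums for the $2{+}1{+}1$ and all-distinct patterns are evaluated with the same care.
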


\begin{proof} 
\begin{equation} \label{eq:proofvar}
\Var \left( \ln\left( \frac{\sigma_{t+1}}{\sigma_t} \right) \right) = \Var \left( \frac{c}{2d_\sigma}\left( \frac{\|\pplus \|^2}{n} - 1\right)\right) = \frac{c^2}{4d_\sigma^2 n^2}\!\!\!\! \underbrace{\Var\left(\|\pplus\|^2 \right)}_{\E\left(\|\pplus\|^4\right) - \E\left(\|\pplus\|^2 \right)^2 }
\end{equation}
The first part of $\Var(\|\pplus\|^2)$, $\E(\|\pplus\|^4)$, is equal to $\E((\sum_{i=1}^n \pdim^2 )^2 )$. We develop it along the dimensions such that we can use the independence of $[\pplus]_i$ with $[\pplus]_j$ for $i\neq j$, to get $\E( 2 \sum_{i=1}^n \sum_{j=i+1}^n \pdim[i]^2\pdim[j]^2  + \sum_{i= 1}^n \pdim[i]^4 )$. For $i \neq 1$ $\pdim[i]$ is distributed according to a standard normal distribution, so $\E\left(\pdim[i]^2\right) = 1$ and $\E\left(\pdim[i]^4\right) = 3$.
\begin{align*}
\E\left(\|\pplus\|^4\right) &= 2 \sum_{i=1}^n \sum_{j=i+1}^n \E\left(\pdim[i]^2\right) \E\left(\pdim[j]^2\right)  + \sum_{i=1}^n \E\left(\pdim[i]^4\right) \\
&= \left(2 \sum_{i=2}^n \sum_{j=i+1}^n 1\right) + 2\sum_{j=2}^n \E\left(\pdim[1]^2\right) + \left(\sum_{i=2}^n 3\right) + \E\left(\pdim[1]^4 \right)\\
&= \left( 2 \sum_{i=2}^n (n-i) \right) + 2(n-1) \E \left(\pdim[1]^2\right) + 3(n-1) + \E\left(\pdim[1]^4 \right)\\
&= \E\left(\pdim[1]^4 \right) + 2(n-1)\E \left(\pdim[1]^2\right) + (n-1)(n+1)
%
\end{align*}
The other part left is $\E(\|\pplus\|^2 )^2$, which we develop along the dimensions to get $\E ( \sum_{i=1}^n \pdim^2  )^2 = ( \E(\pdim[1]^2) + (n-1) )^2$, which equals to $\E(\pdim[1]^2)^2 + 2(n-1)\E(\pdim[1]^2) + (n-1)^2$.
So by subtracting both parts we get \\$\E(\|\pplus\|^4) - \E(\|\pplus\|^2 )^2 = \E(\pdim[1]^4 )  - \E(\pdim[1]^2)^2 + 2(n-1)$, which we insert into Eq.~\eqref{eq:proofvar} to get Eq.~\eqref{eq:var}.

The development of $\E(\pdim[1]^2)$ is the same than the one done in the proof of Theorem~\ref{th:geometricdivergencecumul}. We refer to \cite{chotard2012TRcumulative} for the development of $\E(\pdim[1]^4)$, since limits of space in the paper prevents us to present it here.
\qed\end{proof}

\begin{figure}[t,b]
\centering\vspace*{-1ex}
\subfloat[Without cumulation ($c=1$)]{\label{fig:simwithoutcumulation}\includegraphics[width=0.5\textwidth,trim=0 0 0 12mm,clip]{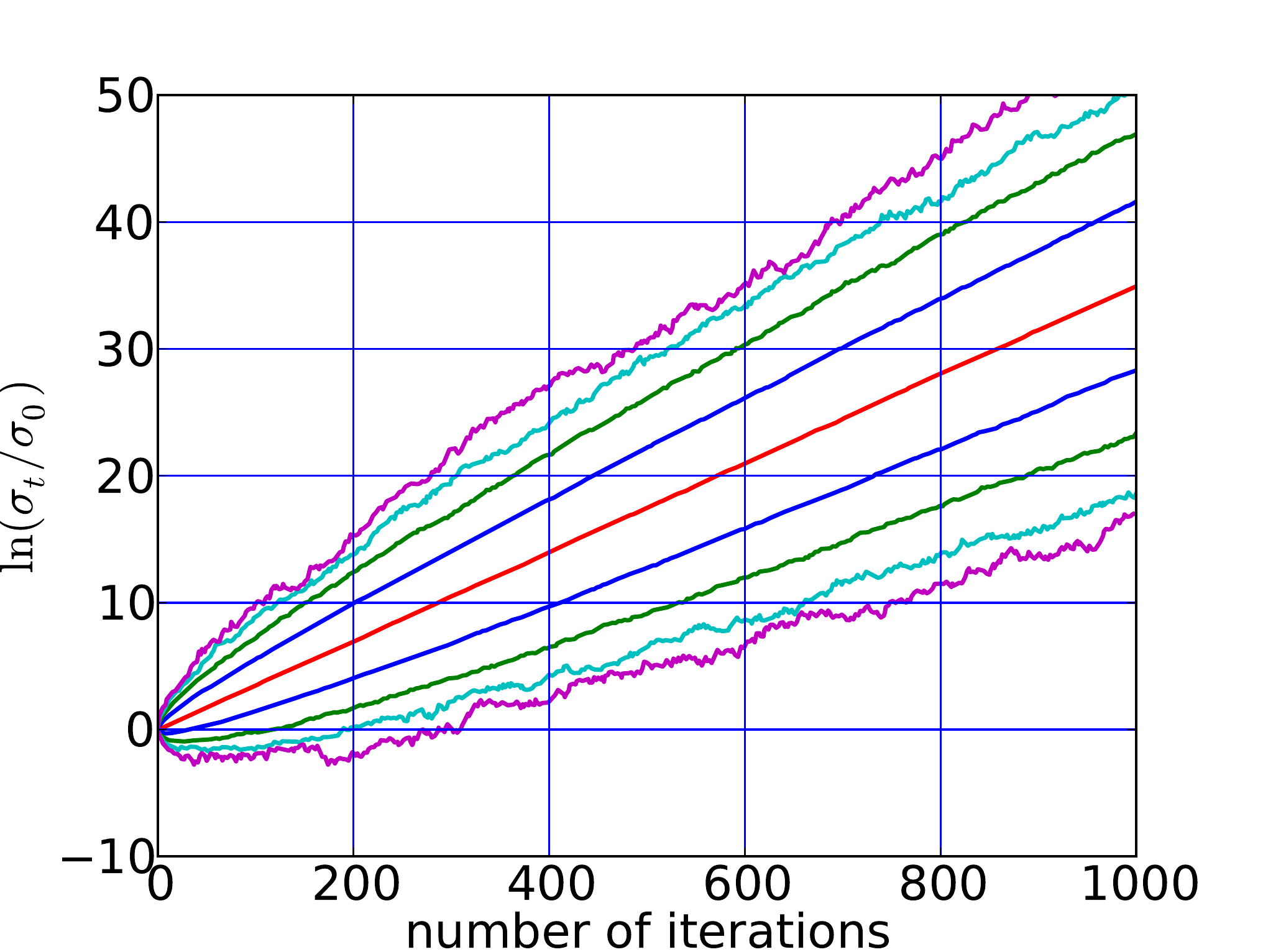}}
\subfloat[With cumulation ($c = 1/\sqrt{20}$)]{\label{fig:simwithcumulation}\includegraphics[width=0.5\textwidth,trim=0 0 0 12mm,clip]{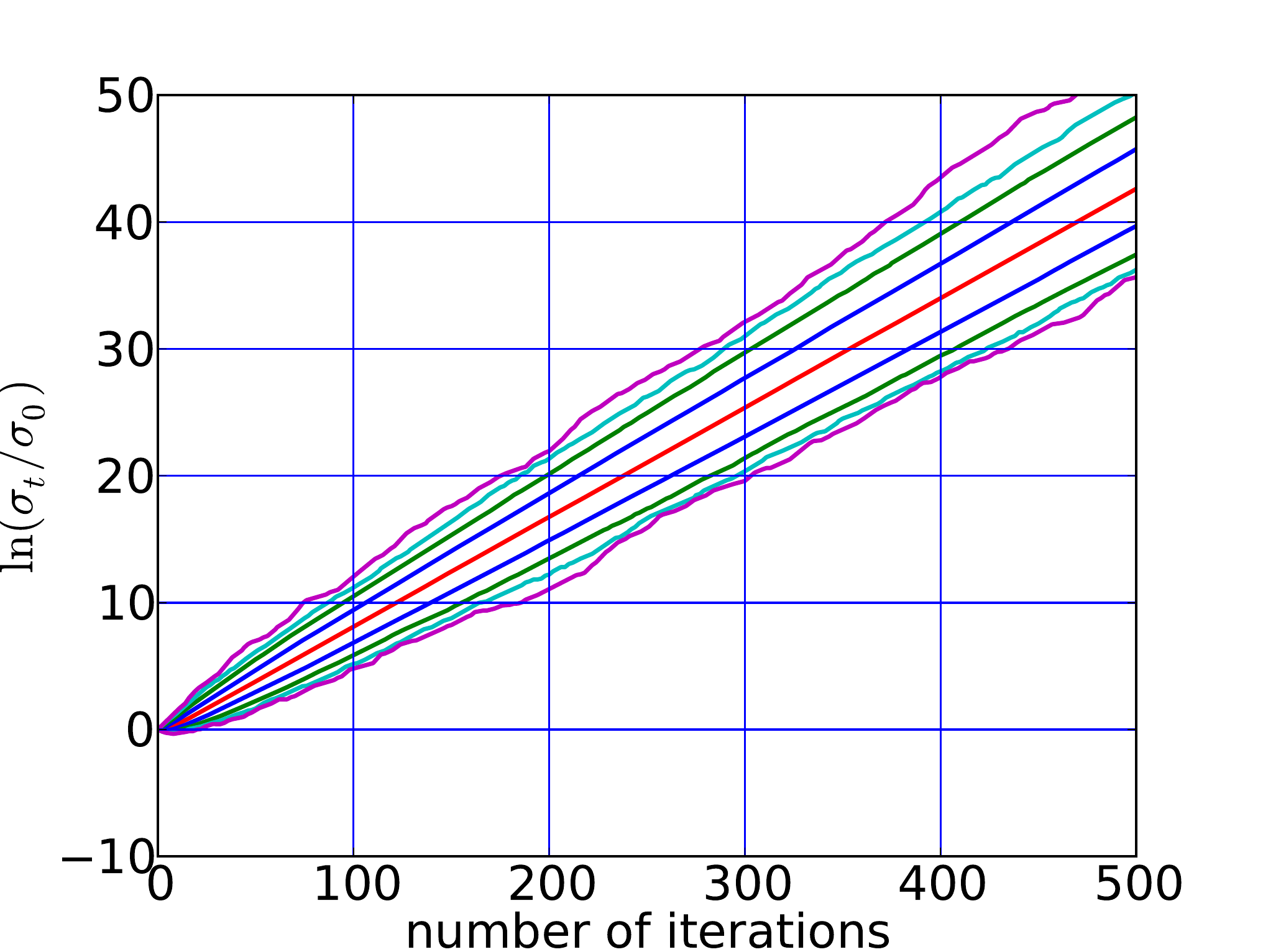}}
\caption{\label{fig:logsigmaevolution}
$\ln(\sig / \sigma_0)$ against $t$. The different curves represent the quantiles of a set of $5.10^3+1$ samples, more precisely the $10^{i}$-quantile and the $1 - 10^{-i}$-quantile for $i$ from $1$ to $4$; and the median. We have $n=20$ and $\lambda = 8$. }
\end{figure}

Figure~\ref{fig:logsigmaevolution} shows the time evolution of $\ln(\sigma_t / \sigma_0)$ for 5001 runs and $c=1$ (left) and $c=1/\sqrt{n}$ (right).
By comparing Figure~\ref{fig:simwithoutcumulation} and Figure~\ref{fig:simwithcumulation} we observe smaller variations of $\ln(\sigma_t / \sigma_0)$ with the smaller value of $c$.

\begin{figure}
\includegraphics[width=0.49\textwidth]{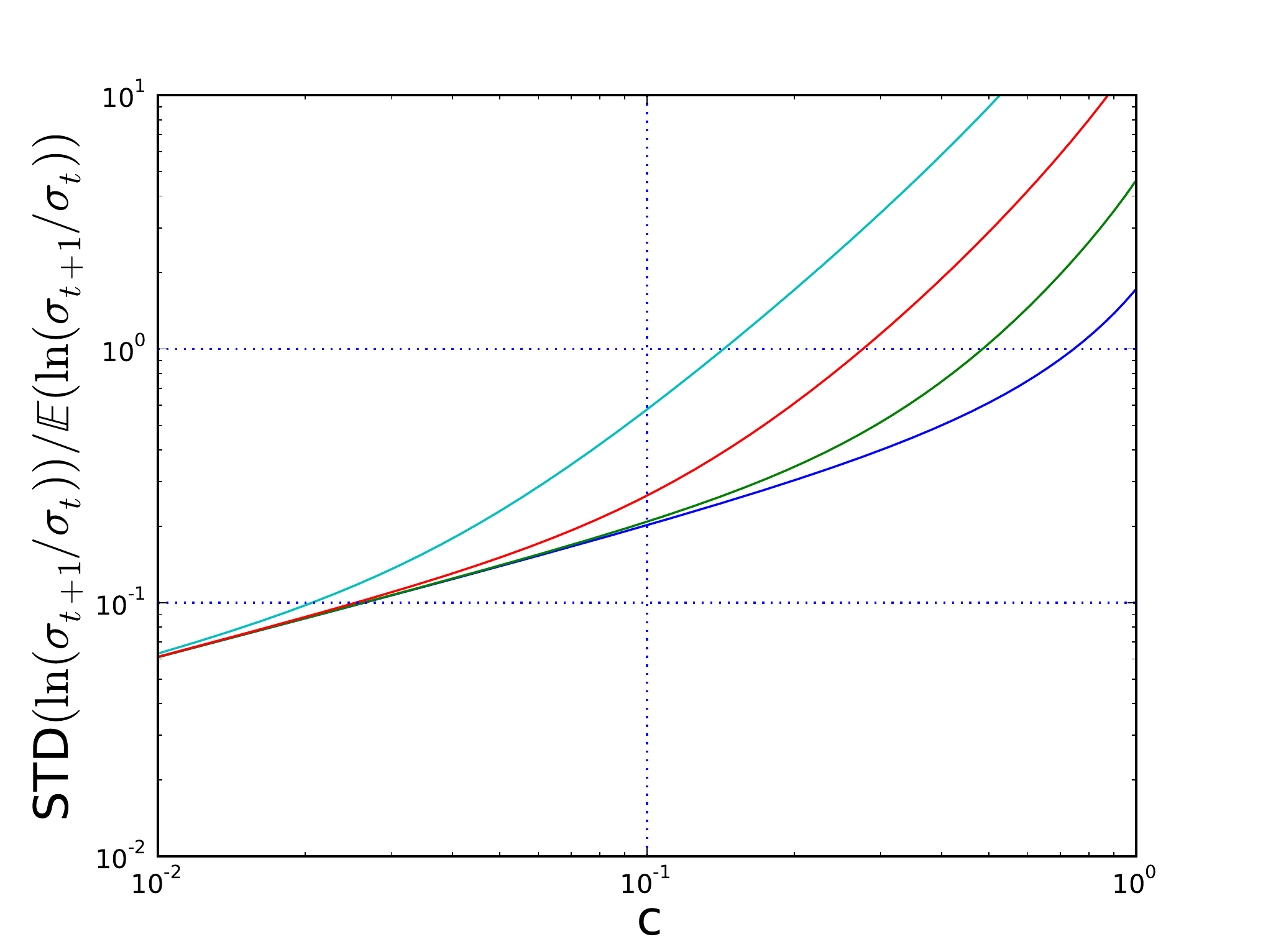}
\includegraphics[width=0.49\textwidth]{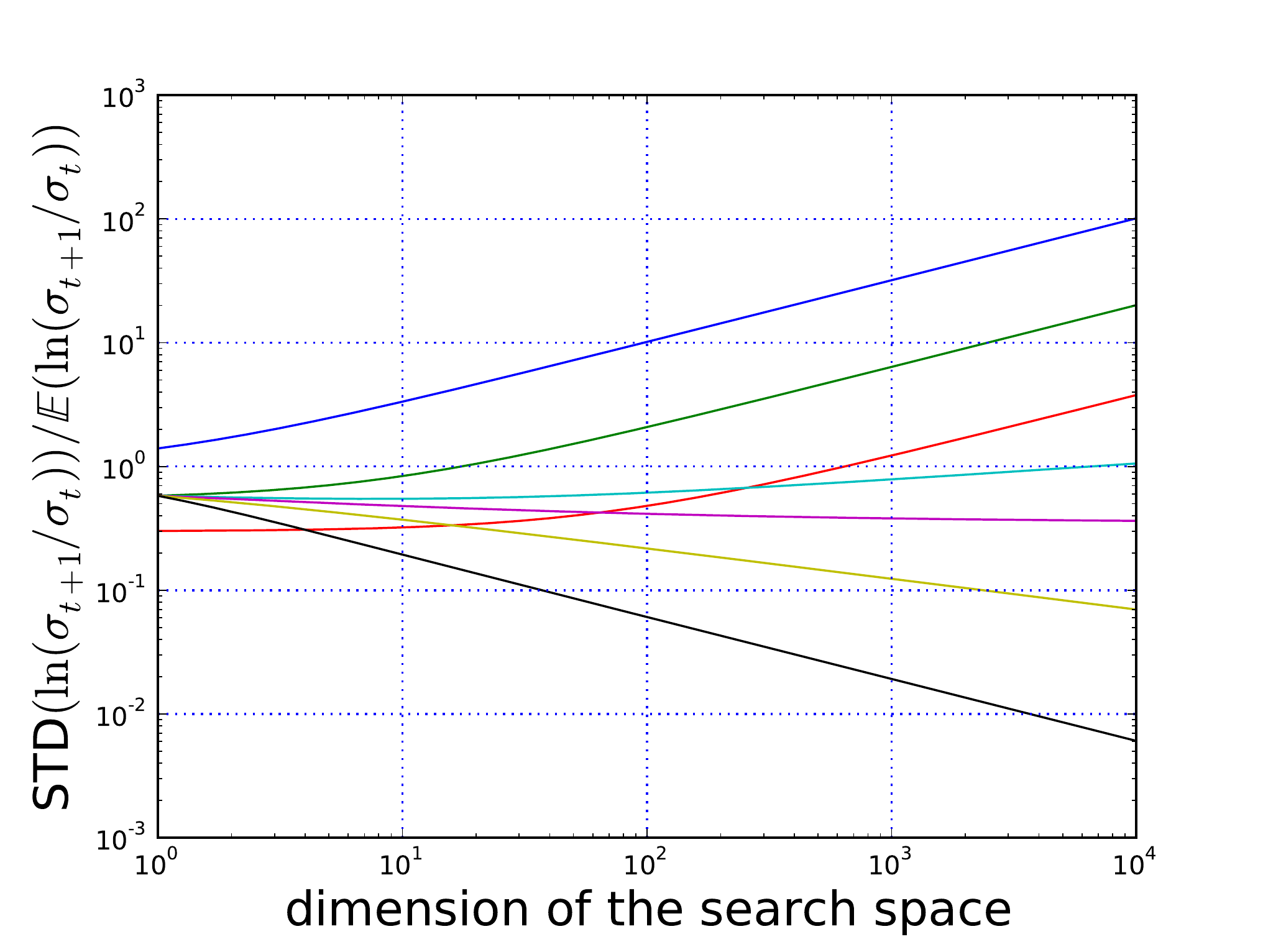}
\caption{\label{fig:stdandc}
Standard deviation of $\ln\left(\sigma_{t+1}/\sig\right)$ relatively to its expectation. Here $\lambda = 8$. The curves were plotted using Eq.~\eqref{eq:var} and Eq.~\eqref{eq:devp4}. On the left, curves for (right to left) $n=2$, $20$, $200$ and $2000$. On the right, different curves for (top to bottom) $c = 1$, $0.5$, $0.2$, $1/(1+n^{1/4})$, $1/(1+n^{1/3})$, $1/(1+n^{1/2})$ and $1/(1+n)$.}
\end{figure}

Figure~\ref{fig:stdandc} shows the relative standard deviation of $\ln\left(\sigma_{t+1}/ \sigma_t \right)$ (i.e. the standard deviation divided by its expected value). 
Lowering $c$, as shown in the left, decreases the relative standard deviation. To get a value below one, $c$ must be smaller for larger dimension. In agreement with Theorem \ref{th:var}, In Figure~\ref{fig:stdandc}, right, the relative standard deviation increases like $\sqrt{n}$ with the dimension for constant $c$ (three increasing curves). A careful study \cite{chotard2012TRcumulative} of the variance equation of Theorem \ref{th:var} shows that for the choice of $c = 1/(1+n^{\alpha})$, if $\alpha > 1/3$ the relative standard deviation converges to $0$ with $\sqrt{(n^{2\alpha} + n)/n^{3 \alpha}}$. Taking $\alpha = 1/3$ is  a critical value where the relative standard deviation converges to $1/(\sqrt{2} \E(\Nlambda)^2)$. On the other hand, lower values of $\alpha$ makes the relative standard deviation diverge with $n^{(1-3\alpha)/2}$.

\section{Summary} \label{sec:conclusion}

We investigate throughout this paper the ($1,\lambda$)-CSA-ES on affine linear functions composed with strictly increasing transformations. We find, in Theorem~\ref{th:geometricdivergencecumul}, the limit distribution for $\ln(\sig/\sigma_0)/t$ and rigorously prove the desired behaviour of $\sigma$ with $\lambda\ge3$~ for any $c$, and with $\lambda=2$ and cumulation ($0<c<1$): the step-size diverges geometrically fast. In contrast, without cumulation ($c=1$) and with $\lambda=2$, a random walk on $\ln (\sigma)$ occurs, like for the ($1,2$)-$\sigma$SA-ES \cite{hansen2006ecj} (and also for the same symmetry reason). We derive an expression for the variance of the step-size increment. On linear functions when $c = 1/n^{\alpha}$, for $\alpha \geq 0$ ($\alpha = 0$ meaning $c$ constant) and for $n \to \infty$ the standard deviation is about $\sqrt{(n^{2\alpha} + n)/n^{3 \alpha}}$ times larger than the step-size increment. From this follows that keeping $c < 1/n^{1/3}$ ensures that the standard deviation of $\ln(\sigma_{t+1}/ \sigma_t)$ becomes negligible compared to $\ln(\sigma_{t+1}/ \sigma_t)$ when the dimensions goes to infinity. That means, the signal to noise ratio goes to zero, giving the algorithm strong stability. The result confirms that even the largest default cumulation parameter $c=1/\sqrt{n}$ is a stable choice.

\section*{Acknowledgments}
This work was partially supported by the ANR-2010-COSI-002
grant (SIMINOLE) of the French National Research Agency and the ANR COSINUS project ANR-08-COSI-007-12.

\bibliography{biblio}

\bibliographystyle{plain}

\end{document}